\documentclass{article}
\usepackage[T1]{fontenc}
\usepackage{arxiv_preprint,times}

\usepackage{amsmath,amsfonts,bm}

\def\eqref#1{equation~\ref{#1}}

\def\1{\bm{1}}

\DeclareMathAlphabet{\mathsfit}{\encodingdefault}{\sfdefault}{m}{sl}
\SetMathAlphabet{\mathsfit}{bold}{\encodingdefault}{\sfdefault}{bx}{n}

\newcommand{\E}{\mathbb{E}}

\newcommand{\R}{\mathbb{R}}

\DeclareMathOperator*{\argmin}{arg\,min}

\usepackage{url}
\usepackage{amsthm}
\usepackage{graphicx}
\usepackage{booktabs}
\usepackage{float}
\usepackage{flafter}

\usepackage{hyperref}
\hypersetup{colorlinks=true, linkcolor=blue, citecolor=blue, urlcolor=blue}

\newtheorem{proposition}{Proposition}[section]

\newcommand{\Scomp}{S_{\mathrm{comp}}}

\newcommand{\Sphicomp}{S^{\phi_{\mathrm{comp}}}}

\iclrfinalcopy

\hypersetup{
  pdftitle={Compositional Objectives: Learning Structure in Structure},
  pdfauthor={Pranavchandra Vivekananda, Sumukh Bettadapura, Ajan Subramanian}
}

\newcommand{\appref}[1]{\hyperref[#1]{\textit{Appendix~\ref*{#1}}}}
\newcommand{\proofref}[1]{\textit{Proof is in \appref{#1}.}}
\title{Compositional Objectives: \\ Learning Structure in Structure}
\author{Pranavchandra Vivekananda, Sumukh Bettadapura, Ajan Subramanian \\
Kubo Technologies, Inc. \\
\texttt{pranavchandra@kubocreate.com} \\
\texttt{sumukh@kubocreate.com}, \texttt{ajan@kubocreate.com}}

\date{}
\begin{document}

\maketitle


\begin{abstract}
Intelligence is defined in many ways. One of these definitions defines
intelligence as the pursuit of learnable novelty. However, learnable novelty can
be meaningless without the ability to compose the learned structures to take
action and achieve goals. Learnable novelty builds on epiplexity, which is a
way to measure learnable structure in data through a bounded observer. In this paper, we investigate a closed-form spectral
approximation to compute epiplexity. We use a fixed-trace constraint and find
that the epiplexity objective prefers a more uniform distribution of spectral
mass rather than concentrating it in a small number of directions. However, a
representation may spread information across many directions without organizing
that information into features useful for a particular task. To address this
gap, we propose a compositional objective whose observer measures the
relationships between the parts and interactions of an image. We compare it with
the original spectral objective given only the masked parts. In our ImageNet
training runs, the spectral objective with masked parts produces an almost
maximally spread representation while achieving the strongest frozen-feature
classification performance on most evaluations, more than doubling the
linear-probe accuracy of the whole-image baseline. Across multiple image
benchmarks, changing what the observer sees matters more than adding relation
and interaction tokens. At the same time, our prediction-oriented compositional
objective produces substantially better held-out observer prediction but
relatively weaker classification, revealing that spectral
diversity, predictability, and downstream utility are distinct properties. These
results suggest that the usefulness of spectral spreading depends not only on
how much structure is preserved, but on which relationships the observer makes
available to the objective.
\end{abstract}

\section{Introduction}
\label{sec:intro}

Intuitively, we recognize that one part of intelligence is the ability to find
patterns in observations. Another part is the ability to reduce these
observations into efficient, reusable representations. Much of our
understanding of perception, learning and reasoning has placed these two parts
at the center of cognition and machine learning. Information-theoretic
approaches, like Minimum Description Length~\citep{rissanen1978modeling,grunwald2007mdl}, further
formalize the connection between pattern extraction and compression as
extracting structural regularity in data. However, these abilities are
constrained by the computational bounds of the entity or model that is
observing the data. Computationally constrained theories of information
formalize precisely this distinction between what is present in data and what
can be used.

One such formalization is epiplexity~\citep{finzi2026epiplexity}. It is defined as the
description length in bits of the time-bounded MDL-minimizing program.
Intuitively, epiplexity can be thought of as the amount of structure that the
best model extracts from data, given constraints of time and compute bounds.
This separates structural information from the residual, time-bounded entropy.

\citet{zhang2026learnable} introduce the idea of learnable novelty
and make the concept of epiplexity operational through a differentiable
spectral approximation. Their observer combines a fixed nonlinear feature map
with a fitted linear readout. This readout then predicts a trainable encoder's
representation, and the encoder is optimized to increase the spectral score of
the fitted readout. The observer thus participates in learning rather than
merely evaluating its outcome: its construction shapes the signal through which
the representation is trained. This provides a concrete setting in which to ask
how the organization of the observer influences the organization of what is
learned. Prior work shows that the choice of views affects which task
information is retained in contrastive learning~\citep{tian2020views}.

Together, these developments provide a way to measure learnable structure and
use its approximation as a training objective. However, we find that a
criterion for extracting structure is not necessarily a criterion for making
that structure useful. Our theoretical formulation examines the distinction
through a counterexample in which representations share the same covariance
spectrum while differing in task utility.

The issue is not that spectral objectives cannot reward relationships, but that
spectral diversity alone does not specify which relationships should be
learned. We therefore turn to the observer as a place to introduce a more
explicit objective on how the whole is explained. Consider a tree. A learner
may capture recurring leaf shapes, bark textures, and branch forms in reusable
descriptions. However, a catalog of these parts does not explain how a
particular tree is put together. To trace a path from a leaf to the trunk, the
learner must represent which branches connect to which. Knowing the pieces is
not knowing the whole. So, to summarize, we can decompose intelligence into
three parts. The first is the ability to find patterns. The second is to reduce
these patterns into efficient, reusable parts. And the third, our proposition:
composition.

In this paper, we make three contributions:
\begin{enumerate}
  \item We analyze the spectral approximation to epiplexity and show that,
  under a fixed-trace constraint, the objective favors a more uniform
  distribution of spectral mass. We complement this analysis with a
  counterexample to show that the uniform distribution of spectral mass does
  not necessarily contribute to downstream utility.
  \item We train four different models on the full ImageNet-1K training set
  and evaluate performance on CIFAR-100, DTD, EuroSAT, CLEVR, and SpatialSense.
  We find that input pressure is the leading reason for improved performance
  over any architectural change.
  \item Through our experimentation, we find that three properties of a
  representation can diverge: how broadly it distributes variance, how well an
  observer can predict it, and how effectively a downstream learner can use it.
\end{enumerate}

\section{Background}
\label{sec:background}

\subsection{Epiplexity}
\label{sec:epiplexity}

\citet{finzi2026epiplexity} define the time-bounded minimum description length
of a data object $D$ as
\begin{equation}
  \mathrm{MDL}_T(D) = \min_{P \in \mathcal{P}_T}
  \big\{ \ell(P) + \E_D[-\log_2 p_P(D)] \big\},
\end{equation}
where $\mathcal{P}_T$ contains programs satisfying the time bound $T$,
$\ell(P)$ is the length of $P$ in bits, and $\E_D$ is the expectation over the
data object $D$, treated as a random variable. The two terms balance the bits
needed to describe the model against those needed to encode the data using it.
Epiplexity is $S_T(D) = \ell(P^\star_T)$, the description length in bits of the
program $P^\star_T$ that minimizes this time-bounded objective.

For random data objects $\mathcal{U}, \mathcal{V}$, conditional epiplexity is
$S_T(\mathcal{V} \mid \mathcal{U}) = \ell(P^\star_{\mathcal{V} \mid \mathcal{U}})$,
where
\begin{equation}
  P^\star_{\mathcal{V} \mid \mathcal{U}} \in
  \argmin_{P \in \mathcal{P}^{\mathcal{U}}_T}
  \big\{ \ell(P) + \E_{(\mathcal{U},\mathcal{V})}
  [-\log_2 p_P(\mathcal{V} \mid \mathcal{U})] \big\}
\end{equation}
and $\mathcal{P}^{\mathcal{U}}_T$ contains admissible conditional probabilistic
programs satisfying the time bound $T$. It is the description length in bits
of the program minimizing the expected combined cost of describing the model
and encoding $\mathcal{V}$ given $\mathcal{U}$. Epiplexity can be estimated
using prequential and requential coding, but neither naturally provides the
cheap, differentiable, per-batch signal required for direct gradient-based
representation learning. This motivates a closed-form surrogate that can be
optimized during training.

\subsection{The spectral observer}
\label{sec:spectral-observer}

\citet{zhang2026learnable} make the conditional epiplexity quantity tractable
using a fixed observer and a closed-form readout. We describe it below. For a
batch $X = (x_1, \dots, x_N)$,
$z_i = E_\theta(x_i) \in \R^D,\ \ \|z_i\|_2 = 1,\ \ h_i = \phi(x_i) \in \R^m$,
where $E_\theta$ is the trainable, unit-normalized encoder and $\phi$ is a
fixed random nonlinear feature map. Originally, the scored targets are a
system's outputs; here they are the learned representations $z_i$. Collect
these into matrices $Z_\theta \in \R^{N \times D}$ and $H \in \R^{N \times m}$.
These observer features and targets are then standardized, giving
$\widetilde{H}$ and $\widetilde{Z}_\theta$ (\appref{app:preprocessing}).

The observer fits a linear readout by ridge regression~\citep{hoerl1970ridge},
using a quadratic penalty of the form studied in Tikhonov
regularization~\citep{tikhonov1977solutions}:
\begin{equation}
  W_\lambda
  = \argmin_{W \in \R^{m \times D}}
    \Big\{ \|\widetilde{Z}_\theta - \widetilde{H} W\|_F^2
    + \lambda \|W\|_F^2 \Big\}
  = \big( \widetilde{H}^\top \widetilde{H} + \lambda I_m \big)^{-1}
    \widetilde{H}^\top \widetilde{Z}_\theta,
  \qquad \lambda > 0,
  \label{eq:ridge-readout}
\end{equation}
so that the observer's predictions in the preprocessed target coordinates are
$\widehat{\widetilde{Z}}_\theta = \widetilde{H} W_\lambda$. The spectral
approximation to epiplexity is
\begin{equation}
  S^\phi(Z_\theta \mid X)
  = \frac{1}{2} \sum_j \log_2 \big( 1 + \eta\, s_j(W_\lambda)^2 \big),
  \qquad \eta > 0,
  \label{eq:spectral-epiplexity}
\end{equation}
where $s_j(W_\lambda)$ denotes a singular value of the fitted readout and
$\eta$ is the spectral resolution parameter. The encoder is trained by
\begin{equation}
  \min_\theta \; -S^\phi(Z_\theta \mid X).
  \label{eq:spectral-objective}
\end{equation}

Clearly, they have proposed an elegant approach to make epiplexity usable.
There is now a way to extract patterns and their compressed forms from data.
However, we still need to answer what these spectral scores are rewarding in a
representation.

\section{What the Spectral Objective Rewards}
\label{sec:analysis}

\subsection{Spectral preference under a fixed-trace condition}
\label{sec:fixed-trace}

We show that the spectral objective tends to reward a flatter spectrum. The
log-determinant score depends both on the total magnitude of the readout and on
how that magnitude is distributed across spectral directions. To isolate the
second effect, consider a fixed total readout energy: $\operatorname{tr}(W_\lambda^\top W_\lambda) = \|W_\lambda\|_F^2 = \tau$.
The following proposition characterizes the resulting spectral preference.
The bound follows from the concavity of the logarithm and Jensen's
inequality~\citep{boyd2004convex}.

\begin{proposition}[Fixed-trace spectral preference]
\label{prop:fixed-trace}
Let $A \in \R^{D \times D}$ be positive semidefinite, with $\operatorname{tr}(A) = \tau > 0,\ \ \operatorname{rank}(A) \le r,\ \ 1 \le r \le D$.
For $\eta > 0$,
\begin{equation}
  \frac{1}{2} \log_2 \det(I_D + \eta A)
  \le \frac{r}{2} \log_2 \Big( 1 + \frac{\eta \tau}{r} \Big).
  \label{eq:fixed-trace-bound}
\end{equation}
Equality holds if and only if $A$ has exactly $r$ nonzero eigenvalues, each
equal to $\tau / r$. \proofref{app:proof-fixed-trace}
\end{proposition}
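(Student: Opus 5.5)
We reduce the statement to a scalar inequality on eigenvalues and apply Jensen's inequality to $t \mapsto \log_2(1+\eta t)$.

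\textbf{Spectral reduction.} Since $A$ is positive semidefinite, it has an orthonormal eigendecomposition with eigenvalues $\mu_1 \ge \dots \ge \mu_D \ge 0$. The rank condition gives $\mu_j = 0$ for $j > r$, and the trace condition gives $\sum_{j=1}^r \mu_j = \tau$. The eigenvalues of $I_D + \eta A$ are $1 + \eta\mu_j$, so
\[
\tfrac12 \log_2\det(I_D + \eta A) = \tfrac12 \sum_{j=1}^{r} \log_2(1+\eta\mu_j),
\]
because the terms with $j > r$ equal $\log_2 1 = 0$.

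\textbf{Jensen step.} The function $f(t) = \log_2(1+\eta t)$ is strictly concave on $[0,\infty)$, since $f''(t) = -\eta^2/\bigl((1+\eta t)^2 \ln 2\bigr) < 0$. Apply Jensen's inequality with uniform weights $1/r$ to $\mu_1,\dots,\mu_r$:
\[
\frac1r\sum_{j=1}^r f(\mu_j) \le f\Bigl(\frac1r \sum_{j=1}^r \mu_j\Bigr) = f(\tau/r).
\]
Multiplying by $r/2$ gives the bound \eqref{eq:fixed-trace-bound}.

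\textbf{Equality case.} This is the only step needing care, and strict concavity handles it.

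\emph{Forward direction.} Suppose equality holds in \eqref{eq:fixed-trace-bound}. Then equality holds in Jensen's inequality for the $r$ chosen values. Since $f$ is strictly concave, all of $\mu_1,\dots,\mu_r$ must be equal, and their sum is $\tau$, so each equals $\tau/r$. As $\tau > 0$, these $r$ eigenvalues are nonzero. The remaining $D-r$ eigenvalues vanish by the rank bound. Hence $A$ has exactly $r$ nonzero eigenvalues, each equal to $\tau/r$.

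\emph{Converse.} If $A$ has exactly $r$ nonzero eigenvalues, each equal to $\tau/r$, substitution gives $\tfrac12\log_2\det(I_D+\eta A) = \tfrac r2 \log_2(1+\eta\tau/r)$.

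\textbf{Bookkeeping when $\operatorname{rank}(A) < r$.} We still take the $r$ largest eigenvalues $\mu_1,\dots,\mu_r$, some of them zero. The argument is unchanged, and strict concavity excludes equality in this case, consistent with the ``exactly $r$'' condition.
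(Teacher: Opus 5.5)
Your proof is correct and follows essentially the same route as the paper: reduce to the $r$ largest (zero-padded) eigenvalues, apply Jensen to the strictly concave $u \mapsto \log_2(1+\eta u)$, and use strict concavity together with $\tau > 0$ to get the equality case. You also spell out the converse direction of the equality characterization by substitution, which the paper leaves implicit.
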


Applying Proposition~\ref{prop:fixed-trace} to $A = W_\lambda^\top W_\lambda$
shows that, \emph{at fixed readout energy, the score favors distributing
spectral mass evenly across the available directions}. Moreover, the bound is
increasing in $r$, so at fixed energy the score also prefers using more
directions. Since $W_\lambda \in \R^{m \times D}$ has rank at most
$\min(m, D)$, when $m \ge D$ all $D$ readout directions are feasible and the
bound becomes
\begin{equation}
  S^\phi(Z_\theta \mid X)
  \le \frac{D}{2} \log_2 \Big( 1 + \frac{\eta \tau}{D} \Big).
  \label{eq:fixed-trace-full}
\end{equation}

This motivates us to answer the next question: whether a uniform distribution
of spectral mass helps a representation's downstream task utility.

\subsection{Equal spectra do not imply equal task utility}
\label{sec:counterexample}

We propose a counterexample, considering two different representations of the
same input, with the same covariance spectra, and show that they can perform
differently on the same downstream task (Figure~\ref{fig:same-spectra}).

\begin{figure}[t]
\centering
\includegraphics[width=\linewidth]{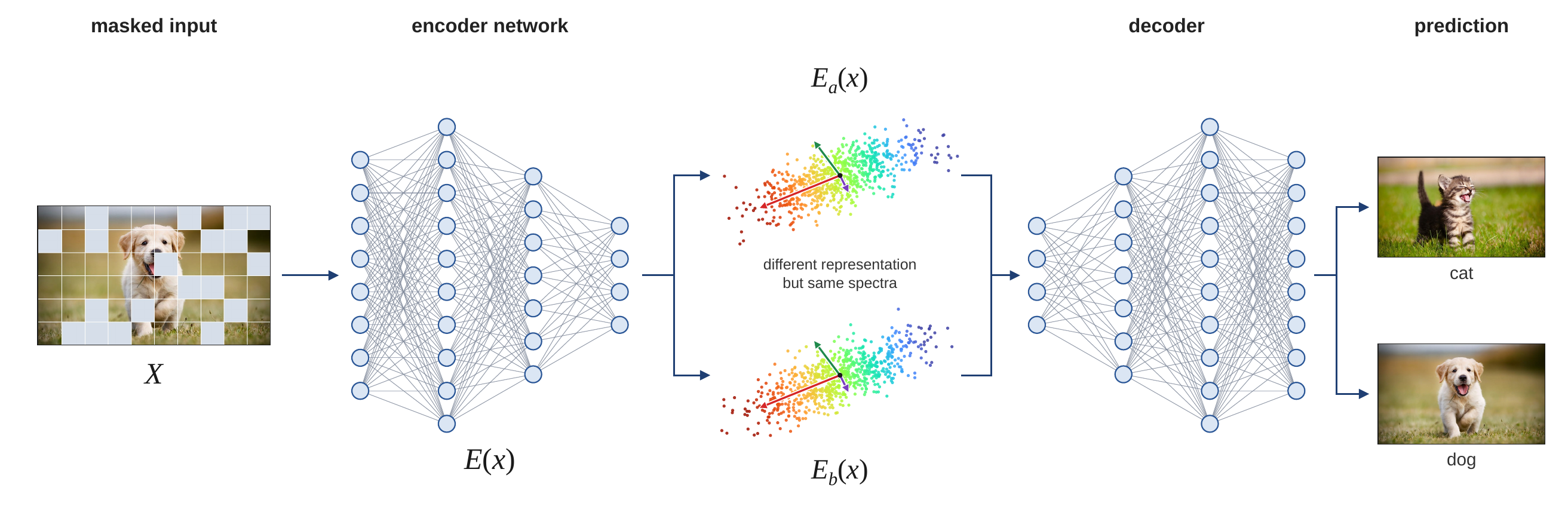}
\caption{Identical spectra need not give the same prediction. Two representations
$E_a(x)$ and $E_b(x)$ of the same masked input share a spectrum but lead the
decoder to different outputs. A conceptual illustration, not trained models.}
\label{fig:same-spectra}
\end{figure}

\begin{proposition}
\label{prop:counterexample}
There exist two unit-norm representations and a fixed observer such that their
representation covariances are identical and isotropic, their spectral scores
are identical, but one representation permits perfect linear classification
while the other permits only chance-level classification.
\proofref{app:proof-counterexample}
\end{proposition}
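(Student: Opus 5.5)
The plan is an explicit finite construction in which one representation is obtained from the other by a relabeling that preserves all second-order statistics but destroys label alignment. Take a dataset of $N=4$ inputs $x_1,\dots,x_4$ with binary labels $y=(+1,+1,-1,-1)$ and work in $D=2$.

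\textbf{The two representations.} Set
\[
z^{(a)}_i \in \{(1,0),(1,0),(-1,0),(-1,0)\},\qquad \text{suitably mixed with } (0,\pm1),
\]
and concretely choose the four unit vectors $e_1,-e_1,e_2,-e_2$. For $E_a$, assign them so the label is linearly separable: $x_1\mapsto e_1$, $x_2\mapsto e_2$, $x_3\mapsto -e_1$, $x_4\mapsto -e_2$. These are separated by $w=e_1+e_2$. For $E_b$, assign $x_1\mapsto e_1$, $x_2\mapsto -e_1$, $x_3\mapsto e_2$, $x_4\mapsto -e_2$. Every label class is then antipodal, so each class mean is $0$, and any linear functional $w^\top z+b$ takes values $\pm w_1+b$ on class $+1$ and $\pm w_2+b$ on class $-1$. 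Each class spans a sign pattern symmetric about $b$, so no linear rule classifies more than half correctly in the balanced sense; exactly $2$ of $4$ is the maximum, i.e.\ chance. Both sets are the same four vectors, so the empirical mean is zero and the covariance $\frac1N Z^\top Z=\tfrac12 I_2$ is identical and isotropic.

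\textbf{The shared observer.} I need identical spectral scores, and the cleanest route is to choose $\phi$ so that $S^\phi$ depends on $Z$ only through an orthogonally invariant quantity. My first attempt is an observer whose features are uninformative about index order: take $\phi$ constant, or at least $\widetilde H$ identical across the two cases with $\widetilde H^\top$ acting symmetrically. That risks trivializing $W_\lambda$. A better choice is to make $E_b$ equal $E_a$ composed with a permutation $\pi$ of the samples and to pick $\phi$ with $\widetilde H^\top P_\pi = Q\widetilde H^\top$ for an orthogonal $Q$. In that case $W^{(b)}_\lambda = Q' W^{(a)}_\lambda$ for some orthogonal $Q'$, and the singular values, hence the scores~\eqref{eq:spectral-epiplexity}, coincide.

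A simpler alternative that sidesteps $\phi$ symmetry: relate the representations by an orthogonal map $R$ in representation space, $Z_b=Z_aR$. Standardization commutes with it when the covariance is isotropic, so $\widetilde Z_b=\widetilde Z_a R$, hence $W^{(b)}_\lambda=W^{(a)}_\lambda R$ with the same singular values, for \emph{any} fixed $\phi$. The construction must then be redone so that $Z_b=Z_aR$ yet separability differs. A pure rotation preserves separability, so this route fails with a single $R$ applied per-sample. I will therefore use the permutation version and choose $\phi(x_i)$ so that the permutation $\pi=(2\,3)$ is realized as an orthogonal symmetry of the observer features. For example, take $h_i$ to be the $i$-th standard basis vector in $\R^4$ ($m=4$, which is invertible and centered-standardization-consistent). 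Then permuting rows of $\widetilde H$ equals right-multiplying by a permutation matrix, and $W^{(b)}_\lambda=P W^{(a)}_\lambda$.

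\textbf{Main obstacle.} The hard part is verifying that the preprocessing (centering and scaling per appendix) respects these symmetries. With one-hot $h_i$, the centered columns permute among themselves and the per-column scales are equal, so $\widetilde H_b = P_\pi \widetilde H_a$ equivalently equals $\widetilde H_a P'$. I would check this explicitly, then confirm the scores via singular-value invariance, and finish with the chance-level argument above.
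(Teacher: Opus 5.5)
\textbf{The chance-level half of your construction fails; the score-equality half works.} In $E_b$ the classes are $\{e_1,-e_1\}$ versus $\{e_2,-e_2\}$. That is an XOR configuration: it is not linearly separable. But your claim that no linear rule gets more than $2$ of $4$ correct is false. Take $w=(0,1)^\top$, $b=\tfrac12$. The rule $\operatorname{sign}(w^\top z+b)$ labels $e_1$, $-e_1$ and $-e_2$ correctly and errs only on $e_2$, giving $75\%$. Worse, the label is a deterministic function of the code, $y=\operatorname{sign}(z_1^2-z_2^2)$, so a quadratic classifier is perfect. Zero class means only show that the least-squares or mean-difference direction is degenerate; they do not bound the accuracy of every linear rule.

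The paper gets chance level in the strong sense: $z_B=(a,b)/\sqrt2$ is statistically independent of $y=t$, so every classifier $h:\mathbb{R}^2\to\{-1,+1\}$ has accuracy $\tfrac12$. With four distinct codes each used once, that independence is impossible, because each code carries a single label.

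\textbf{Repair.} Double the batch to $N=8$. Let $E_a$ send the four label-$(+1)$ inputs to $e_1,e_1,e_2,e_2$ and the label-$(-1)$ inputs to their negatives. Let $E_b$ send each class to $e_1,-e_1,e_2,-e_2$. Then:
\begin{itemize}
\item both covariances equal $\tfrac12 I_2$;
\item $w=e_1+e_2$ separates $Z_a$;
\item $z_b$ is independent of $y$, so every classifier is at chance.
\end{itemize}
Your one-hot observer (now $m=8$) still gives equal scores, and more directly than through commuting permutations. Here $\widetilde H$ is a scalar multiple $c_0$ of the centering matrix. Since $\widetilde Z$ is centered, $W_\lambda=\frac{c_0}{c_0^2+\lambda}\widetilde Z$, so the score depends on $Z$ only through $Z^\top Z$, which is $4I_2$ for both. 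Note also that the observer matrix is the same for both encoders: it is $Z$, not $\widetilde H$, that gets permuted, so your phrase ``$\widetilde H_b=P_\pi\widetilde H_a$'' conflates the two.

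\textbf{Comparison with the paper's route.} Once repaired, your argument differs from the paper's. The paper uses the identity observer on the generative factors $(t,a,b)$. The readouts are $cJ_A$ and $cJ_B$, which select different orthonormal coordinate pairs with equal Gram matrices, so the observer genuinely sees the label-relevant factor and still cannot tell the two representations apart. Your observer is a per-sample lookup table, under which any two centered targets with equal Gram matrices tie. That is a more general tie-breaking failure, but a less informative illustration.
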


This counterexample ultimately shows us that the spectrum by itself does not
identify which features of the input occupy the representation's directions.
So, under the spectral objective $S^\phi$, we find that there
is no signal that forces the representation to be useful for downstream tasks.
This motivates us to think about what the observer rewards, and in
Section~\ref{sec:method} we elucidate our core contribution of a Compositional
Objective.

\section{The Compositional Objective}
\label{sec:method}

\subsection{Formalization of the Compositional Objective}
\label{sec:comp-observer}

In this section we introduce the notion of a compositional objective. We
construct this objective by providing inputs that are decomposed into multiple
parts. These parts form three kinds of tokens (Figure~\ref{fig:observer-tokens}):

\begin{figure}[t]
\centering
\includegraphics[width=\linewidth]{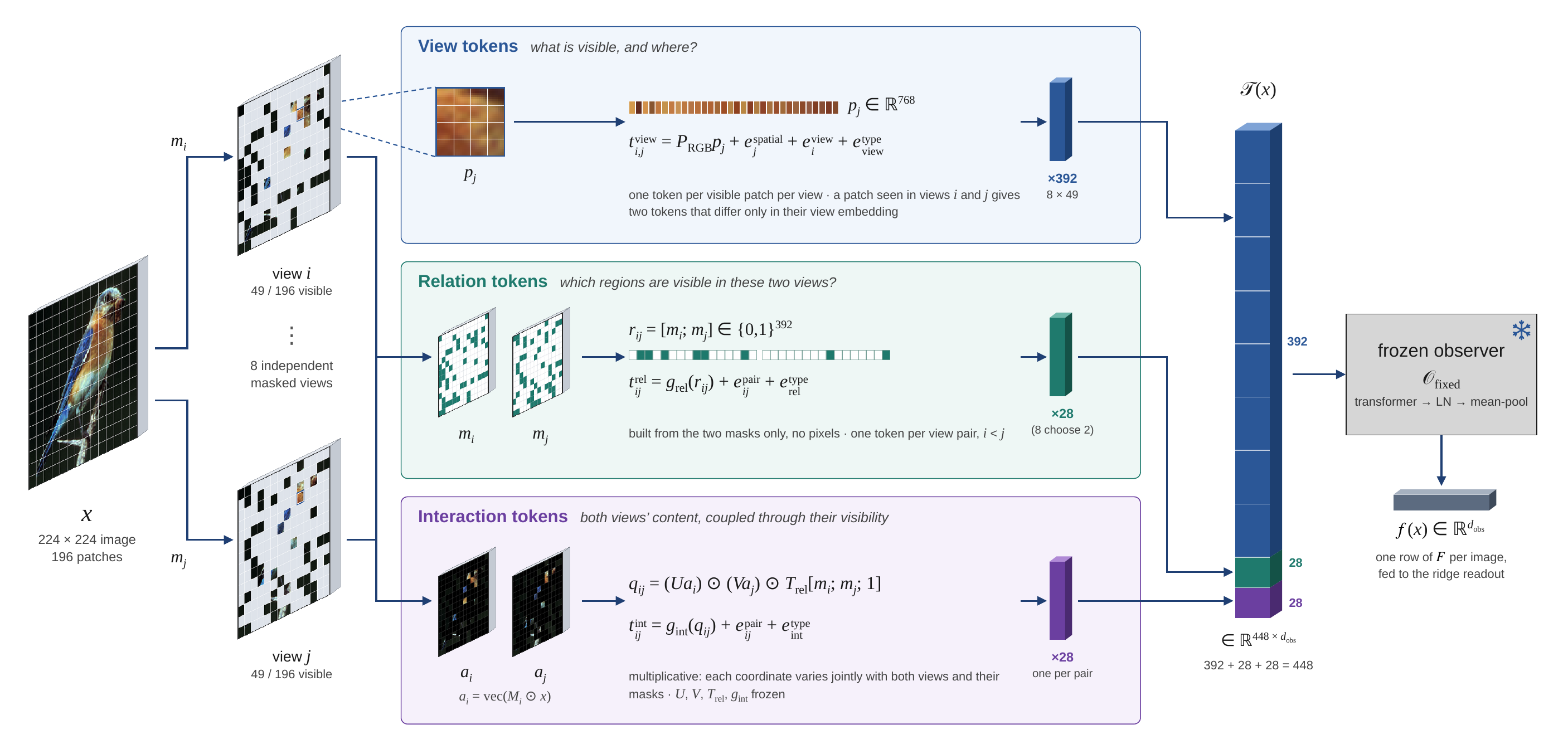}
\caption{The compositional observer's input tokens. Eight masked views of a
$196$-patch image give $392$ view tokens, $28$ relation tokens from the
visibility masks of each view pair, and $28$ interaction tokens that couple both
views' content with their visibility. A frozen transformer maps the $448$ tokens
to one feature vector $f(x)$.}
\label{fig:observer-tokens}
\end{figure}

\begin{enumerate}
  \item \emph{View tokens}: These are inputs that are masked differently. Every
  different mask makes a different view.
  \item \emph{Relation tokens}: We posit that the observer needs to have
  information to understand how each view relates to each other view.
  \item \emph{Interaction tokens}: For every view token pair and its
  corresponding relation token, we construct a fixed projection of the
  multiplicative interaction $(U u_i) \odot (V u_j) \odot (T\, [r_{ij}; 1])$,
  where $u_i, u_j$ are the two views, $r_{ij}$ is their relation, and $U$, $V$,
  and $T$ are fixed Gaussian matrices.
\end{enumerate}
We study two main ablations. The first, $\Sphicomp$, keeps the original
epiplexity objective $S^\phi$ of Section~\ref{sec:spectral-observer} and passes
its observer only the view tokens. The second combines all three kinds of
tokens into the observer's feature map, $\phi_{\mathrm{comp}}$, which our
compositional objective $\Scomp$ uses. Rather than retaining the spectral
objective of $S^\phi$ and changing only its input to $\phi_{\mathrm{comp}}$,
$\Scomp$ measures how much representation variance the observer actually
explains. We try to distinguish between using the decomposed input as the only
signal for composition, versus altering the observer itself to fit the input
uniquely.

\paragraph{Centering and the shared readout.}
Our objective uses its own preprocessing and readout, distinct from those of
Section~\ref{sec:spectral-observer}. Let
$f_i = \phi_{\mathrm{comp}}(x_i; \mathcal{M}_i) \in \R^m$ denote the
compositional observer features. We stack the centered features
$f_i - \bar{f}$, where $\bar{f}$ is their batch mean, into
$F \in \R^{N \times m}$, and the centered encoder outputs into
$\bar{Z}_\theta = Z_\theta - \mathbf{1}_N \mu_Z^\top \in \R^{N \times D}$, where
$\mu_Z$ is the batch mean of the encoder outputs. Neither is standardized. We
define the covariances
\begin{equation}
  C_F = \frac{1}{N} F^\top F, \qquad
  C_Z = \frac{1}{N} \bar{Z}_\theta^\top \bar{Z}_\theta, \qquad
  C_{FZ} = \frac{1}{N} F^\top \bar{Z}_\theta.
  \label{eq:comp-covariances}
\end{equation}
Centering makes the target mean the reference predictor, so $\Scomp$ measures
prediction of variation across examples rather than of average activations.
The readout solves a mean-error ridge problem,
\begin{equation}
  B
  = \argmin_{B' \in \R^{m \times D}}
    \Big\{ \frac{1}{N} \|\bar{Z}_\theta - F B'\|_F^2
    + \gamma \|B'\|_F^2 \Big\}
  = (C_F + \gamma I_m)^{-1} C_{FZ}, \qquad \gamma > 0.
  \label{eq:comp-readout}
\end{equation}
Relative to the readout $W_\lambda$ of \eqref{eq:ridge-readout}, $B$ uses
centering without standardization, $u_Z = 1$, and $\gamma = \lambda / N$. The
observer remains fixed, while the readout is recomputed from the current
encoder outputs.

\subsection{\texorpdfstring{$\Scomp$}{S\_comp}: predictive covariance gain}
\label{sec:gcomp}

A spectrally rich readout need not explain a substantial amount of
representation variation. $\Scomp$ addresses this distinction by comparing the
target covariance with the regularized covariance remaining after prediction
(Figure~\ref{fig:compositional-objective}).

\begin{figure}[t]
\centering
\includegraphics[width=\linewidth]{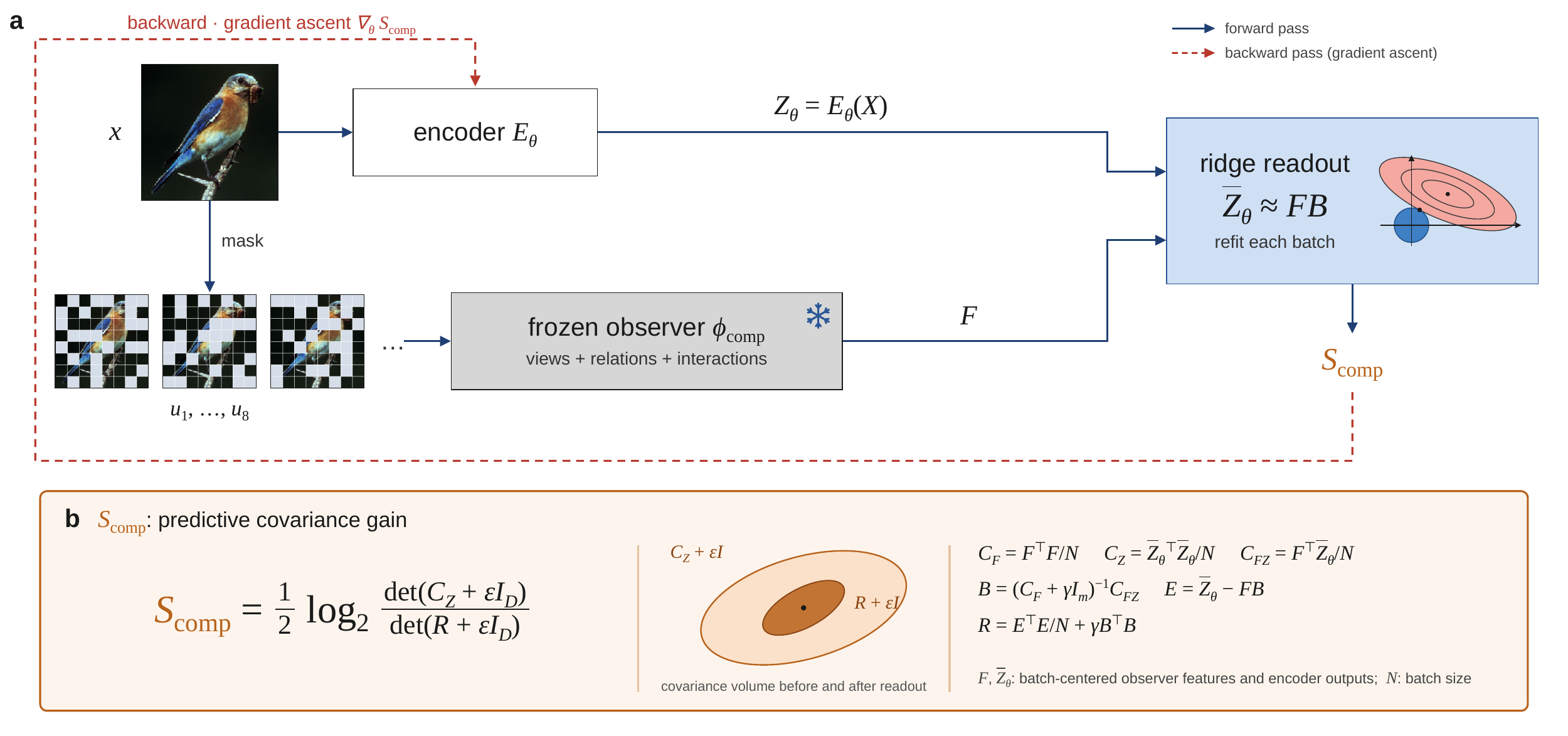}
\caption{The compositional objective. (a) The encoder $E_\theta$ sees the whole
image, the frozen observer $\phi_{\mathrm{comp}}$ builds features $F$ from
masked views, and a ridge readout $B$, refitted each batch, predicts the centered
encoder outputs; the encoder ascends $\Scomp$. (b) $\Scomp$ compares the target
covariance $C_Z + \epsilon I_D$ with the regularized remaining cost
$R + \epsilon I_D$.}
\label{fig:compositional-objective}
\end{figure}

Define $E = \bar{Z}_\theta - F B$
and
\begin{equation}
  R = \frac{1}{N} E^\top E + \gamma B^\top B
    = C_Z - C_{FZ}^\top (C_F + \gamma I_m)^{-1} C_{FZ},
  \label{eq:gcomp-residual}
\end{equation}
where $R$ is the unexplained variance. Further, the objective is
\begin{equation}
  \Scomp(Z_\theta \mid X)
  = \frac{1}{2} \log_2 \frac{\det(C_Z + \epsilon I_D)}{\det(R + \epsilon I_D)},
  \qquad \epsilon > 0.
  \label{eq:gcomp}
\end{equation}
Here, $\epsilon$ provides a positive covariance floor. The matrix $R$ includes
the ridge penalty and is therefore \emph{not simply the covariance of
prediction errors}. The score measures reduction in regularized covariance
volume relative to the constant-mean predictor. As $\gamma, \epsilon \to 0$,
$\Scomp$ approaches $\tfrac{1}{2} \log_2 \det C_Z / \det(C_Z - C_{FZ}^\top
C_F^{-1} C_{FZ})$, the mutual information in bits between the observer
features and the representation when they are jointly Gaussian, with
nonsingular marginal covariances and positive-definite conditional
covariance~\citep{bach2002kernel}. The finite-regularization score $\Scomp$ is
a regularized covariance-volume measure.

For positive $\gamma$ and $\epsilon$,
\begin{equation}
  \Scomp \ge 0, \qquad \Scomp = 0 \iff C_{FZ} = 0.
  \label{eq:gcomp-properties}
\end{equation}
\proofref{app:proof-gcomp} Thus, the score
vanishes when the centered observer features have no linear cross-covariance
with the representation.

Unlike $S^\phi$, which \emph{scores the spectral content of the readout},
$\Scomp$ \emph{scores the reduction in regularized target covariance the
readout achieves}, so the two need not favor the same representations. The
encoder minimizes
\begin{equation}
  \mathcal{L}(\theta) = -\Scomp(Z_\theta \mid X),
  \label{eq:comp-losses}
\end{equation}
and compositionality enters only through the observer's features.

\section{Experiments}
\label{sec:experiments}

We test the three contributions on full ImageNet. Section~\ref{sec:spreading}
tests the spectral preferences of Section~\ref{sec:analysis} and
\appref{app:analysis}, Section~\ref{sec:observer-input} tests whether
performance follows the observer's input, and
Section~\ref{sec:three-properties} tests whether spread, predictability, and
utility diverge.

\subsection{Setup}
\label{sec:setup}

\paragraph{Pretraining.} We pretrain on the full
ImageNet-1K~\citep{russakovsky2015imagenet} training set ($1{,}281{,}167$
images) with a ViT encoder~\citep{dosovitskiy2021vit} whose $192$-dimensional output is
unit-normalized. Each method is trained for ten epochs ($33{,}360$ updates at
batch size $384$) with two pretraining seeds, without class labels. All results
evaluate the final checkpoint with the encoder frozen. Training details and
hardware are in \appref{app:experiment-details}.

\paragraph{Objectives and notation.} $S^\phi$ is the original spectral
objective of Section~\ref{sec:spectral-observer}, with a whole-image observer
and its reference preprocessing conventions. $\Sphicomp$ applies the same
spectral score to an observer that sees only the eight masked views of
Section~\ref{sec:comp-observer}, without relation or interaction tokens.
$\Scomp$ is the objective of
Section~\ref{sec:gcomp}, using the views, relations, and interactions of
Section~\ref{sec:comp-observer}. In all three methods, the trainable encoder
receives the whole image.

\paragraph{Evaluation.} We report ImageNet linear-probe top-1 and top-5
accuracy and $5$-nearest-neighbor accuracy, a separate low-label ImageNet
evaluation with $20$ labels per class, and transfer to
CIFAR-100~\citep{krizhevsky2009learning}, DTD~\citep{cimpoi2014describing},
and EuroSAT~\citep{helber2019eurosat}. For relational evaluation, we report
SpatialSense~\citep{yang2019spatialsense} and a relational-question subset of
CLEVR~\citep{johnson2017clevr}, using readouts on encoder features and
metadata.

\begin{table}[t]
\caption{Frozen-encoder accuracy (\%) after ten epochs on ImageNet-1K, averaged
over two seeds. Bold marks the best of the three methods; per-seed values are in
\appref{app:experiment-details}.}
\label{tab:stage2-accuracy}
\centering
\small
\begin{tabular}{lccc}
\toprule
Evaluation & $S^\phi$ & $\Sphicomp$ & $\Scomp$ \\
\midrule
ImageNet top-1 & 3.19 & \textbf{7.02} & 3.30 \\
ImageNet top-5 & 9.57 & \textbf{17.21} & 9.64 \\
ImageNet 5-NN & 1.62 & \textbf{2.67} & 1.48 \\
Low-label ImageNet & 1.94 & \textbf{2.23} & 2.07 \\
CIFAR-100 & 8.32 & \textbf{9.68} & 6.94 \\
DTD & 11.22 & 11.20 & \textbf{11.33} \\
EuroSAT & 63.50 & \textbf{68.02} & 65.69 \\
\midrule
SpatialSense & \textbf{52.84} & 52.75 & 52.32 \\
CLEVR relational subset & \textbf{44.49} & 44.44 & 44.27 \\
\bottomrule
\end{tabular}
\end{table}

\begin{table}[t]
\caption{Representation diagnostics and counted training compute, averaged over
two seeds. Geometry uses the low-label ImageNet test features; held-out
reduction uses the common compositional observer and is unavailable for
$S^\phi$.}
\label{tab:stage2-diagnostics}
\centering
\small
\begin{tabular}{lccc}
\toprule
Measurement & $S^\phi$ & $\Sphicomp$ & $\Scomp$ \\
\midrule
Effective rank (of 192) & 39.27 & 175.23 & 27.07 \\
Isotropy & 0.194 & 0.869 & 0.112 \\
Held-out error reduction (\%) & -- & 18.56 & 83.51 \\
Counted TFLOPs per update & 3.826 & 5.055 & 5.810 \\
Counted PFLOPs per run & 127.62 & 168.64 & 193.82 \\
\bottomrule
\end{tabular}
\end{table}

\subsection{The spectral objective spreads the representation}
\label{sec:spreading}

Proposition~\ref{prop:fixed-trace} predicts that the spectral score rewards a
flatter spectrum. With the masked-view observer, $\Sphicomp$ produces a nearly
isotropic representation, with effective rank $175.23$ of $192$ and isotropy
$0.869$ (Table~\ref{tab:stage2-diagnostics}, Figure~\ref{fig:stage2-spectra}).
$\Scomp$, whose score is convex
in its improvement eigenvalues (\appref{app:proof-gcomp}), instead concentrates
its spectrum, with effective rank $27.07$ and isotropy $0.112$. Each geometry
follows the preference of its score. With the whole-image observer, the same
spectral score, $S^\phi$, reaches isotropy $0.194$: the spectrum the score can attain
depends on what the observer sees. The most spread representation is not the
most useful on every task: it trails $\Scomp$ on DTD and $S^\phi$ on both
relational evaluations (Table~\ref{tab:stage2-accuracy}), as
Proposition~\ref{prop:counterexample} allows.

\subsection{Observer input drives performance}
\label{sec:observer-input}

Changing what the spectral observer sees, from the whole image ($S^\phi$) to
masked views ($\Sphicomp$), produces the largest gains in Table~\ref{tab:stage2-accuracy}
(Figure~\ref{fig:stage2-input}). ImageNet top-1
accuracy rises from $3.19\%$ to $7.02\%$, top-5 from $9.57\%$ to $17.21\%$,
low-label accuracy from $1.94\%$ to $2.23\%$, CIFAR-100 from $8.32\%$ to
$9.68\%$, and EuroSAT from $63.50\%$ to $68.02\%$; each difference is positive
in both seeds. DTD is unchanged ($11.22\%$ and $11.20\%$). The architectural
change in $\Scomp$, which adds relation and interaction tokens and replaces the
spectral score, does not add to these gains: $\Scomp$ is below $\Sphicomp$ on
every classification evaluation except DTD, where it is highest
($11.33\%$). On SpatialSense and the CLEVR relational subset, the three methods
are within $0.6$ points of one another, and their ordering changes across
seeds.

\begin{figure}[H]
\centering
\begin{minipage}[t]{0.485\linewidth}
\centering
\includegraphics[width=\linewidth]{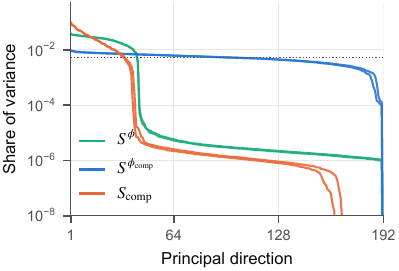}
\caption{Variance share per principal direction of the low-label ImageNet test
features, one line per seed. Dotted: isotropic share $1/192$.}
\label{fig:stage2-spectra}
\end{minipage}\hfill
\begin{minipage}[t]{0.485\linewidth}
\centering
\includegraphics[width=\linewidth]{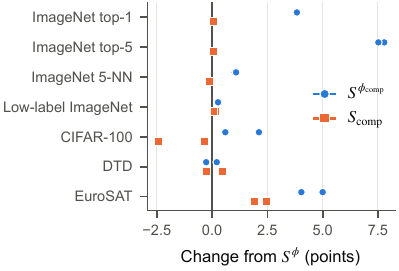}
\caption{Accuracy change from $S^\phi$ of the same seed on each evaluation, one mark
per seed.}
\label{fig:stage2-input}
\end{minipage}
\end{figure}

\subsection{Spread, predictability, and utility diverge}
\label{sec:three-properties}

Held-out predictability gives a third ordering. Under a common frozen
observer, a ridge readout reduces held-out prediction error by $83.51\%$ for
$\Scomp$, compared with $18.56\%$ for $\Sphicomp$
(Table~\ref{tab:stage2-diagnostics}; Figure~\ref{fig:stage2-dissociation}), yet $\Scomp$ is less accurate on most
evaluations. $\Sphicomp$ is the most spread and the most accurate on
six of the seven classification evaluations, but it is the less predictable of
the two; $\Scomp$ is the most predictable and the least spread; and $S^\phi$
is best on the two relational readouts. Held-out error
reduction is an evaluation metric, not the training score. Spread,
predictability, and utility therefore order the methods differently.

\begin{figure}[H]
\centering
\includegraphics[width=0.82\linewidth]{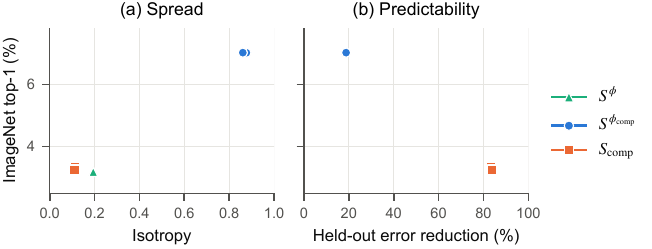}
\caption{ImageNet top-1 accuracy against (a) isotropy and (b) held-out error
reduction, one mark per method and seed.}
\label{fig:stage2-dissociation}
\end{figure}

\subsection{Limitations}
\label{sec:limitations}

Two seeds permit replication and paired comparisons, but not strong
significance claims. The change from $S^\phi$ to $\Sphicomp$ also changes image preprocessing and score calibration
(\appref{app:experiment-details}), so its gains cannot be attributed to the
observer's input alone, and the comparison between $\Sphicomp$ and $\Scomp$ does not separate the relation and interaction tokens from the score.
The relational readouts combine encoder features with metadata and do not
isolate the encoder. Counted training compute differs across methods
(Table~\ref{tab:stage2-diagnostics}).

\section{Related Work}
\label{sec:related-work}

Our work builds on the minimum description length
principle~\citep{rissanen1978modeling,grunwald2007mdl} and computationally
bounded notions of information~\citep{xu2020usable,finzi2026epiplexity}, and on
\citet{zhang2026learnable}, whose learnable-novelty estimator resolves the
failure modes of curiosity and surprise-minimization
objectives~\citep{schmidhuber2010formal,burda2019largescale,friston2010free}.
These frameworks measure how much structure a bounded observer can extract,
but not which structure a representation should retain. Spectral and
redundancy-reduction
objectives~\citep{yu2020mcr2,zbontar2021barlow,bardes2022vicreg} and spectral
proxies for representation
quality~\citep{garrido2023rankme,agrawal2022alphareq} reward spread; our
counterexample shows that spread alone cannot determine utility, paralleling
the finding that maximizing mutual information does not by itself yield
useful representations~\citep{tschannen2020mi}. Dimensional collapse can also
occur in contrastive representation learning~\citep{jing2022collapse}. Masked
and context-prediction
methods~\citep{he2022mae,assran2023ijepa} learn to predict from visible parts,
and part--whole architectures build composition into the
encoder~\citep{sabour2017capsules,hinton2023partwhole,locatello2020slot}; our
observer is instead fixed and random, in the tradition of reservoir
computing~\citep{jaeger2004harnessing,maass2002realtime}. Fixed random
features followed by a fitted linear model also appear in random-feature
methods~\citep{rahimi2007random}. Our main
contribution is to analyze what spectral epiplexity objectives reward: we show
that spectral spread alone cannot determine utility, and we find empirically
that spread, predictability, and utility can diverge.

\section{Conclusion}
\label{sec:conclusion}

We started with a simple question: when the spectral approximation to
epiplexity is used to train a representation, what is it actually rewarding?
In theory, the answer is spread. At fixed readout energy, the score prefers to
distribute spectral mass evenly across as many directions as it can, yet two
representations with identical spectra can be useful in entirely different
ways. Spread alone does not tell us what a representation is good for.

Our experiments on full ImageNet-1K bear this out. The biggest lever we found
was not a new architecture but what the observer gets to see: switching the
spectral observer from the whole image to masked views more than doubled
linear-probe top-1 accuracy. Adding relation and interaction tokens and
switching to our predictive score, $\Scomp$, did not add to that gain, even
though it made the representation far more predictable. Spread,
predictability, and utility turned out to be three different properties, and
they rank the same methods in three different orders.

To return to the tree from the introduction: recognizing the leaves is not the
same as knowing how they connect, and an objective that only measures how much
structure is present cannot tell the two apart. Building objectives that reward
the connections themselves, and testing them across more seeds, at larger
scale, and on tasks that directly probe compositional generalization, is where
we want to go next.

\subsection*{AI use statement}

In this work, we used generative AI tools to help develop theoretical models or
conceptual frameworks, formulate mathematical claims, provide critical
ingredients for proving mathematical claims, assist in the writing of proofs,
propose or refine hypotheses, design or provide feedback on research
methodology or experiments, and implement methods. We have not used generative
AI tools to generate synthetic datasets, clean or reformat datasets, support
qualitative and thematic data analysis, or interpret results; these uses are
not applicable to this work. Additionally, we used generative AI tools to
create or modify scientific figures or images, suggest experimental parameters,
create or edit software code, create artifacts, draft parts of a research
paper, and summarize or analyze existing literature. We have reviewed all
AI-assisted work. We have manually verified the proofs, and we have thoroughly
vetted the code, results, and claims made in the paper. We take responsibility
for the final content of this work, including text, claims, or artifacts
produced with the aid of generative AI.

\bibliography{references}
\bibliographystyle{iclr2027_conference}

\clearpage
\appendix
\counterwithin{figure}{section}
\counterwithin{table}{section}
\counterwithin{equation}{section}
\section{Spectral Observer Preprocessing}
\label{app:preprocessing}

Following \citet{zhang2026learnable}, the observer features and targets are
standardized as
\begin{equation}
  \widetilde{H}_{ic} = \frac{H_{ic} - \mu_c}{\widehat{\sigma}_c \sqrt{m}},
  \qquad
  \widetilde{Z}_\theta = \frac{Z_\theta - \mathbf{1}_N \mu_Z^\top}{u_Z},
  \label{eq:preprocessing}
\end{equation}
where $\mu_c$ and $\widehat{\sigma}_c$ are the empirical mean and standard
deviation of observer feature $c$, $\mu_Z$ is the empirical mean encoder
output, and $\mathbf{1}_N$ is the vector of $N$ ones. The target scale
$u_Z > 0$ is fixed in advance, not estimated from the targets' empirical
standard deviation.

\section{Proofs}
\label{app:proofs}

\subsection{Proof of Proposition~\ref*{prop:fixed-trace}}
\label{app:proof-fixed-trace}

Let $\rho_1, \dots, \rho_r$ denote the potentially nonzero eigenvalues of $A$,
padding the list with zeros when $\operatorname{rank}(A) < r$. Then
\begin{equation*}
  \rho_j \ge 0, \qquad \sum_{j=1}^r \rho_j = \tau.
\end{equation*}
The remaining $D - r$ eigenvalues contribute zero to the log-determinant, so
\begin{equation*}
  \frac{1}{2} \log_2 \det(I_D + \eta A)
  = \frac{1}{2} \sum_{j=1}^r \log_2 (1 + \eta \rho_j).
\end{equation*}
The function $g(u) = \log_2(1 + \eta u)$ is strictly concave for $u \ge 0$,
since
\begin{equation*}
  g''(u) = -\frac{\eta^2}{\ln(2) (1 + \eta u)^2} < 0.
\end{equation*}
Jensen's inequality therefore gives
\begin{equation*}
  \frac{1}{r} \sum_{j=1}^r \log_2 (1 + \eta \rho_j)
  \le \log_2 \Big( 1 + \frac{\eta}{r} \sum_{j=1}^r \rho_j \Big)
  = \log_2 \Big( 1 + \frac{\eta \tau}{r} \Big).
\end{equation*}
Multiplying by $r/2$ proves the bound. Strict concavity implies equality only
when $\rho_1 = \dots = \rho_r = \tau / r$. Because $\tau > 0$, these
eigenvalues are all positive, so equality also requires
$\operatorname{rank}(A) = r$.

Proposition~\ref*{prop:fixed-trace} is a conditional statement: it does not
establish that training holds the readout trace fixed or reaches the bound,
and it concerns the readout rather than the representation.

\subsection{Proof of Proposition~\ref*{prop:counterexample}}
\label{app:proof-counterexample}

Let $t, a, b$ be independent random variables, each uniformly distributed on
$\{-1, +1\}$. Define the input and downstream label by $x = (t, a, b),\ \ y = t$.
Here, $t$ determines the label, while $a$ and $b$ are irrelevant to this
particular classification task. Consider the representations
\begin{equation}
  z_A(x) = \frac{1}{\sqrt{2}} \begin{pmatrix} t \\ a \end{pmatrix}, \qquad
  z_B(x) = \frac{1}{\sqrt{2}} \begin{pmatrix} a \\ b \end{pmatrix}.
  \label{eq:ce-representations}
\end{equation}
Both representations have unit norm and zero mean. Independence and the unit
variance of each random sign give $\operatorname{Cov}(z_A) = \operatorname{Cov}(z_B) = \frac{1}{2} I_2$.
Both therefore have full rank and identical, perfectly uniform covariance
spectra. Now choose a fixed observer, here the identity feature map rather than
a random one, $\phi(x) = (t, a, b)^\top$.

Take a balanced batch $X$ of $N$ inputs containing all eight possible triples
equally often, and let $Z_A, Z_B \in \R^{N \times 2}$ stack the two
representations. Every observer feature and every representation coordinate
then has empirical mean zero, and each observer feature has unit empirical
standard deviation, so the preprocessing of \appref{app:preprocessing}
gives $\widetilde{H} = H / \sqrt{3}$, $\widetilde{Z}_A = Z_A / u_Z$, and
$\widetilde{Z}_B = Z_B / u_Z$. Independence of the signs gives
\begin{equation*}
  \widetilde{H}^\top \widetilde{H} = \frac{N}{3} I_3, \qquad
  \widetilde{H}^\top \widetilde{Z}_A = \frac{N}{\sqrt{6}\, u_Z} J_A, \qquad
  \widetilde{H}^\top \widetilde{Z}_B = \frac{N}{\sqrt{6}\, u_Z} J_B,
\end{equation*}
where
\begin{equation}
  J_A = \begin{pmatrix} 1 & 0 \\ 0 & 1 \\ 0 & 0 \end{pmatrix}, \qquad
  J_B = \begin{pmatrix} 0 & 0 \\ 1 & 0 \\ 0 & 1 \end{pmatrix}
  \label{eq:ce-selections}
\end{equation}
select the observer coordinates $(t, a)$ and $(a, b)$. By
equation~\ref*{eq:ridge-readout}, the ridge readouts are
\begin{equation}
  W_A = c\, J_A, \qquad W_B = c\, J_B, \qquad
  c = \sqrt{\frac{3}{2}}\, \frac{N}{u_Z (N + 3\lambda)}.
  \label{eq:ce-readouts}
\end{equation}
Although the readouts select different observer coordinates, their Gram
matrices are identical, since $J_A^\top J_A = J_B^\top J_B = I_2$:
\begin{equation}
  W_A^\top W_A = W_B^\top W_B = c^2 I_2.
  \label{eq:ce-gram}
\end{equation}

Their spectral scores are consequently equal:
\begin{equation}
  S^\phi(Z_A \mid X) = S^\phi(Z_B \mid X) = \log_2 \big( 1 + \eta c^2 \big).
  \label{eq:ce-scores}
\end{equation}
Their classification properties are different. Since the first coordinate of
$z_A$ is $t / \sqrt{2}$, the linear decision rule $\widehat{y}_A(z_A) = \operatorname{sign}((z_A)_1)$
recovers $y = t$ exactly, whereas $z_B$ depends only on $a$ and $b$, which are
independent of $t$, so every classifier $h : \R^2 \to \{-1, +1\}$ is at chance:
\begin{equation*}
  \Pr[\widehat{y}_A(z_A) = y] = 1, \qquad \Pr[h(z_B) = y] = \tfrac{1}{2}.
\end{equation*}
This proves the claims.

If instead the observer is $\phi(x) = (t, a)$, then $\widetilde{H} = H / \sqrt{2}$,
$\widetilde{H}^\top \widetilde{H} = \tfrac{N}{2} I_2$,
$\widetilde{H}^\top \widetilde{Z}_A = \tfrac{N}{2 u_Z} I_2$, and
$\widetilde{H}^\top \widetilde{Z}_B = \tfrac{N}{2 u_Z}
\big(\begin{smallmatrix} 0 & 0 \\ 1 & 0 \end{smallmatrix}\big)$, since the
observer no longer receives $b$. The readouts are $W_A = c' I_2$ and
$W_B = c' \big(\begin{smallmatrix} 0 & 0 \\ 1 & 0 \end{smallmatrix}\big)$ with
$c' = N / (u_Z (N + 2\lambda))$, so $W_B^\top W_B$ has a single nonzero
eigenvalue $c'^2$ and
\begin{equation*}
  S^\phi(Z_B \mid X) = \tfrac{1}{2} \log_2(1 + \eta c'^2)
  < S^\phi(Z_A \mid X) = \log_2(1 + \eta c'^2).
\end{equation*}

\section{Mathematical Analysis of the Compositional Objective}
\label{app:analysis}

This appendix analyzes the compositional predictive-gain objective $\Scomp$.
We characterize its relationship to regularized prediction error, its
gradients, its sensitivity to unpredictable variation, its spectral allocation
preferences, and its dependence on representation scale. The results concern
the specified observer and the in-sample ridge fit; they do not establish
semantic compositionality or generalization to unseen combinations.

\subsection{Setup and objective definition}
\label{app:analysis-setup}

Let $F \in \R^{N \times m}$ contain the compositional observer features
constructed from image views and their relations, and let
$\bar{Z}_\theta \in \R^{N \times D}$ contain the corresponding whole-image
representations. Both matrices are column-centered across the $N$ examples.
For a frozen observer, $F$ is independent of the encoder parameters $\theta$,
conditional on the sampled images and views. As in
Section~\ref*{sec:comp-observer}, define
\begin{equation*}
  C_F = \frac{1}{N} F^\top F, \qquad
  C_Z = \frac{1}{N} \bar{Z}_\theta^\top \bar{Z}_\theta, \qquad
  C_{FZ} = \frac{1}{N} F^\top \bar{Z}_\theta.
\end{equation*}
The observer's readout is obtained from the inner ridge problem
\begin{equation*}
  B = \argmin_{B' \in \R^{m \times D}}
  \Big\{ \frac{1}{N} \|\bar{Z}_\theta - F B'\|_F^2 + \gamma \|B'\|_F^2 \Big\},
  \qquad \gamma > 0.
\end{equation*}
Writing $A = C_F + \gamma I_m$, the normal equations yield $B = A^{-1} C_{FZ}$.
Define
\begin{equation*}
  Q = C_{FZ}^\top A^{-1} C_{FZ}, \qquad R = C_Z - Q.
\end{equation*}
For a fixed covariance floor $\epsilon > 0$, the score is
\begin{equation}
  \Scomp = \frac{1}{2} \log_2
  \frac{\det(C_Z + \epsilon I_D)}{\det(R + \epsilon I_D)},
  \label{eq:analysis-score}
\end{equation}
and the encoder is trained by minimizing
$\mathcal{L}(\theta) = -\Scomp(\theta)$, as in equation~\ref*{eq:comp-losses}. The
ridge problem is minimized with respect to $B'$, whereas the outer loss is
minimized with respect to $\theta$. The dependence of the fitted readout and
the covariance matrices on the encoder representations is retained when
differentiating the outer loss.

\subsection{Interpretation as regularized predictive improvement}
\label{app:analysis-interpretation}

\begin{proposition}
\label{prop:analysis-residual}
At the ridge solution,
\begin{equation}
  R = \frac{1}{N} (\bar{Z}_\theta - F B)^\top (\bar{Z}_\theta - F B)
  + \gamma B^\top B.
  \label{eq:analysis-residual}
\end{equation}
\end{proposition}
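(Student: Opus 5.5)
The plan is to expand the right-hand side of \eqref{eq:analysis-residual} directly and simplify it with the normal equations for the ridge readout. The only facts needed are the definitions $C_F = \frac{1}{N}F^\top F$, $C_Z = \frac{1}{N}\bar{Z}_\theta^\top \bar{Z}_\theta$, $C_{FZ} = \frac{1}{N}F^\top \bar{Z}_\theta$, and the identity $A B = C_{FZ}$ with $A = C_F + \gamma I_m$. This identity follows from setting the gradient of the strictly convex inner objective to zero. Since $A$ is symmetric positive definite for $\gamma>0$, $B = A^{-1}C_{FZ}$ is the unique minimizer.

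First, I expand the residual term:
\[
\frac{1}{N}(\bar{Z}_\theta - FB)^\top(\bar{Z}_\theta - FB) = C_Z - C_{FZ}^\top B - B^\top C_{FZ} + B^\top C_F B.
\]
Adding $\gamma B^\top B$ combines the last term with the penalty, giving
\[
C_Z - C_{FZ}^\top B - B^\top C_{FZ} + B^\top A B.
\]
Next, I substitute the normal equations $AB = C_{FZ}$. This gives $B^\top A B = B^\top C_{FZ}$, so the right-hand side reduces to $C_Z - C_{FZ}^\top B$. Finally, $C_{FZ}^\top B = C_{FZ}^\top A^{-1} C_{FZ} = Q$, so the expression equals $C_Z - Q = R$.

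No step is a real obstacle. The only care needed is with transposes: I use $B^\top C_{FZ} = C_{FZ}^\top A^{-1} C_{FZ}$, which relies on the symmetry of $A^{-1}$, so that the two cross terms are both equal to $Q$. This symmetry also shows that $R$ is symmetric, which is consistent with the expression in \eqref{eq:gcomp-residual}.
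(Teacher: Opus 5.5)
Your proof is correct and follows the paper's argument essentially verbatim: you expand the right-hand side to $C_Z - C_{FZ}^\top B - B^\top C_{FZ} + B^\top A B$ and use the normal equations $AB = C_{FZ}$ to reduce everything to $C_Z - Q$. The only cosmetic difference is that you first cancel $B^\top A B$ against $B^\top C_{FZ}$, whereas the paper observes directly that all three terms $C_{FZ}^\top B$, $B^\top C_{FZ}$, and $B^\top A B$ equal $Q$.
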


\begin{proof}
Expanding the right-hand side gives
\begin{equation*}
  C_Z - C_{FZ}^\top B - B^\top C_{FZ} + B^\top A B.
\end{equation*}
The normal equations $A B = C_{FZ}$ imply
$C_{FZ}^\top B = B^\top C_{FZ} = B^\top A B = Q$. The expression therefore
equals $C_Z - Q = R$.
\end{proof}

Consequently, $Q \succeq 0$ and $R \succeq 0$, and
\begin{equation*}
  \operatorname{tr}(R) = \min_{B'}
  \Big\{ \frac{1}{N} \|\bar{Z}_\theta - F B'\|_F^2 + \gamma \|B'\|_F^2 \Big\},
  \qquad
  \operatorname{tr}(Q) = \operatorname{tr}(C_Z) - \operatorname{tr}(R).
\end{equation*}
Because the data are centered, $B' = 0$ corresponds to predicting the target
mean before centering. Thus, $\operatorname{tr}(C_Z)$ is the mean-only baseline
loss, $\operatorname{tr}(R)$ is the optimized ridge loss, and
$\operatorname{tr}(Q)$ is the improvement after accounting for regularization.

In particular, $R$ is \emph{residual covariance plus ridge weight cost}, not
residual covariance alone. Likewise, $Q$ represents regularized improvement
rather than simply the covariance of the fitted predictions. Since
$R \preceq C_Z$, the score is nonnegative. The positive floor $\epsilon$ makes
both log-determinants well defined even when the covariance matrices are
rank-deficient.

\subsection{Differential and training incentives}
\label{app:analysis-differential}

Write $C_\epsilon = C_Z + \epsilon I_D$ and $R_\epsilon = R + \epsilon I_D$.
The log-determinant differential gives
\begin{equation*}
  \mathrm{d}\Scomp = \frac{1}{2 \ln 2}
  \operatorname{tr}\big[ C_\epsilon^{-1}\, \mathrm{d}C_Z
  - R_\epsilon^{-1}\, \mathrm{d}R \big].
\end{equation*}
Using $R = C_Z - Q$, this becomes
\begin{equation}
  \mathrm{d}\Scomp = \frac{1}{2 \ln 2}
  \operatorname{tr}\big[ \big( C_\epsilon^{-1} - R_\epsilon^{-1} \big)
  \mathrm{d}C_Z + R_\epsilon^{-1}\, \mathrm{d}Q \big].
  \label{eq:analysis-differential}
\end{equation}
Since $R \preceq C_Z$, we have $C_\epsilon^{-1} - R_\epsilon^{-1} \preceq 0$.
Holding $C_Z$ fixed, a positive-semidefinite increase in $Q$ increases the
score. Holding $Q$ fixed, a positive-semidefinite increase in $C_Z$ cannot
increase the score. The objective therefore rewards predictive improvement
relative to the variation remaining to be explained, rather than target
variation alone.

These are conditional statements: during encoder training, $C_Z$ and $Q$
generally change together. Gradients of the minimized loss $\mathcal{L}$ have
the opposite sign to the score gradients above.

\subsection{Sensitivity to unpredictable variation}
\label{app:analysis-unpredictable}

\paragraph{A fixed-observer scalar example.}
Let $f$ and $e$ be scalar random variables satisfying
\begin{equation*}
  \E[f] = \E[e] = 0, \qquad \E[f^2] = \E[e^2] = 1, \qquad \E[f e] = 0.
\end{equation*}
Consider the target $z_t = f + t e$, where $t \in \R$ is a trainable scalar
parameter, not a random variable. The joint distribution of $f$ and $e$ is
held fixed, and the observer is restricted to predictions of the form
$\widehat{z} = b f$. This paragraph uses population moments; the same
identities hold empirically when the corresponding sample moments are exact.
No target standardization is applied initially. The ridge objective is
\begin{equation*}
  \E[(z_t - b f)^2] + \gamma b^2 = (1 - b)^2 + t^2 + \gamma b^2.
\end{equation*}
Hence $C_F = 1$, $C_{FZ} = 1$, $C_Z = 1 + t^2$, and $b = 1 / (1 + \gamma)$. It
follows that
\begin{equation*}
  Q = \frac{1}{1 + \gamma}, \qquad R = t^2 + \frac{\gamma}{1 + \gamma},
  \qquad
  \Scomp(t) = \frac{1}{2} \log_2
  \frac{1 + t^2 + \epsilon}{t^2 + \frac{\gamma}{1 + \gamma} + \epsilon}.
\end{equation*}
Differentiation yields
\begin{equation}
  \frac{\mathrm{d}\Scomp}{\mathrm{d}t}
  = -\frac{t}{(1 + \gamma) \ln 2\, (1 + t^2 + \epsilon)
  \big( t^2 + \frac{\gamma}{1 + \gamma} + \epsilon \big)}.
  \label{eq:analysis-scalar-derivative}
\end{equation}
The derivative is negative for $t > 0$ and positive for $t < 0$. Maximizing
the score therefore drives nonzero $t$ toward zero, suppressing the component
that this linear observer cannot predict. This does not imply that
observer-unpredictable information is necessarily irrelevant to other tasks.

\paragraph{Effect of target standardization.}
If the target is instead standardized to unit variance,
$\widetilde{z}_t = (f + t e) / \sqrt{1 + t^2}$, then $C_Z = 1$ and
$C_{FZ} = 1 / \sqrt{1 + t^2}$. The score becomes
\begin{equation*}
  \Scomp(t) = -\frac{1}{2} \log_2 \Big( 1 - \frac{1}{(1 + \epsilon)
  (1 + \gamma)(1 + t^2)} \Big).
\end{equation*}
This expression also decreases with $t^2$. Standardization changes the score
and its gradients, but in this example the preference for suppressing the
unpredictable component remains.

\paragraph{Appending an uncorrelated target coordinate.}
Consider appending a centered column $\xi \in \R^N$ satisfying
\begin{equation*}
  F^\top \xi = 0, \qquad \bar{Z}_\theta^\top \xi = 0, \qquad
  v_\xi = \frac{1}{N} \xi^\top \xi.
\end{equation*}
Keeping the observer features, the original target coordinates, and the
scaling unchanged, the augmented target
$\bar{Z}_\theta^{+} = [\bar{Z}_\theta, \xi]$ yields
\begin{equation*}
  C_Z^{+} = \operatorname{diag}(C_Z, v_\xi), \qquad
  Q^{+} = \operatorname{diag}(Q, 0), \qquad
  R^{+} = \operatorname{diag}(R, v_\xi).
\end{equation*}
The extra factor $v_\xi + \epsilon$ cancels between numerator and denominator,
so
\begin{equation}
  \Scomp(\bar{Z}_\theta^{+}) = \Scomp(\bar{Z}_\theta).
  \label{eq:analysis-append}
\end{equation}
An appended, uncorrelated coordinate therefore receives no additional reward,
but it is not actively penalized either. This differs from adding
unpredictable variation to an existing predictive coordinate. Independence
implies the corresponding population result, but finite samples need not have
exactly zero cross-covariances.

\subsection{Spectral representation and allocation preferences}
\label{app:proof-gcomp}

Define the normalized improvement matrix
$M = C_\epsilon^{-1/2} Q\, C_\epsilon^{-1/2}$. Because
\begin{equation*}
  I_D - M = C_\epsilon^{-1/2} R_\epsilon\, C_\epsilon^{-1/2} \succ 0
\end{equation*}
and $M \succeq 0$, the eigenvalues $\mu_1, \dots, \mu_D$ of $M$ satisfy
$0 \le \mu_j < 1$. Factoring $R_\epsilon$ through $C_\epsilon$ gives
\begin{equation}
  \Scomp = -\frac{1}{2} \log_2 \det(I_D - M)
  = -\frac{1}{2} \sum_{j=1}^{D} \log_2 (1 - \mu_j).
  \label{eq:analysis-spectral}
\end{equation}
These eigenvalues measure regularized improvement relative to the target
covariance, with the resolution floor included. Each term is nonnegative, and
a direction with $\mu_j = 0$ contributes zero. All terms vanish exactly when
$Q = 0$, which, because $A \succ 0$, holds if and only if $C_{FZ} = 0$; this
establishes the properties stated in Section~\ref*{sec:gcomp}.

\paragraph{Controlled-covariance analysis.}
Assume full covariance whitening, $C_F = I_m$ and $C_Z = I_D$. This is stronger
than centering or coordinatewise standardization. Then
$Q = \frac{1}{1 + \gamma} C_{FZ}^\top C_{FZ}$. Let $q_1, \dots, q_D$ denote the
eigenvalues of $Q$. Valid joint covariances imply
$0 \le q_j \le 1 / (1 + \gamma)$, with additional rank constraints when
$m < D$. The score is
\begin{equation*}
  \Scomp = \frac{1}{2} \sum_{j=1}^{D} \log_2
  \frac{1 + \epsilon}{1 + \epsilon - q_j},
\end{equation*}
and its marginal rewards and curvatures satisfy
\begin{equation}
  \frac{\partial \Scomp}{\partial q_j}
  = \frac{1}{2 \ln 2\, (1 + \epsilon - q_j)} > 0,
  \qquad
  \frac{\partial^2 \Scomp}{\partial q_j^2}
  = \frac{1}{2 \ln 2\, (1 + \epsilon - q_j)^2} > 0.
  \label{eq:analysis-convex}
\end{equation}
The objective is increasing and convex in each improvement eigenvalue.
Consequently, equal increments in an already-strong direction receive a larger
marginal reward than equal increments in a weak direction.

\paragraph{Fixed-budget consequence.}
Under a fixed total improvement $\sum_j q_j = \tau$, Jensen's inequality gives
\begin{equation}
  \Scomp \ge \frac{D}{2} \log_2
  \frac{1 + \epsilon}{1 + \epsilon - \tau / D}.
  \label{eq:analysis-budget}
\end{equation}
Equality holds for a uniform spectrum whenever it is attainable. Thus, the
uniform spectrum minimizes the score under this constraint, while
concentrating improvement can increase it, subject to eigenvalue and rank
constraints. For example, with $m = D = 2$, $\gamma = 1$, and
$\epsilon \to 0$,
\begin{equation*}
  \Scomp(q_1 = 0.25, q_2 = 0.25) \approx 0.415, \qquad
  \Scomp(q_1 = 0.50, q_2 = 0) = 0.500.
\end{equation*}
Both spectra have total improvement $0.5$, and their residual matrices remain
positive definite in this limit. The concentrated allocation receives the
higher score.

This is a conditional allocation result, not a claim that the objective
necessarily causes dimensional collapse. The score increases with every $q_j$
individually. Concentration becomes relevant when improving one direction
requires sacrificing another, and the actual encoder dynamics depend on which
covariance changes are attainable.

\subsection{Dependence on representation scale}
\label{app:analysis-scale}

Hold $F$ fixed and scale the centered targets by $a > 0$,
$\bar{Z}_\theta \mapsto a \bar{Z}_\theta$. Then $B \mapsto a B$,
$C_Z \mapsto a^2 C_Z$, $Q \mapsto a^2 Q$, and $R \mapsto a^2 R$, so
\begin{equation}
  \Scomp(a \bar{Z}_\theta) = \frac{1}{2} \log_2
  \frac{\det(C_Z + \epsilon a^{-2} I_D)}{\det(R + \epsilon a^{-2} I_D)}.
  \label{eq:analysis-scale}
\end{equation}
When $\epsilon = 0$ and $C_Z$ and $R$ are positive definite, the scale factors
cancel exactly and $\Scomp(a \bar{Z}_\theta) = \Scomp(\bar{Z}_\theta)$. For
fixed $\epsilon > 0$, exact scale invariance no longer holds. Writing
$\delta = \epsilon / a^2$,
\begin{equation}
  \frac{\mathrm{d}\Scomp(a \bar{Z}_\theta)}{\mathrm{d}a}
  = \frac{\epsilon}{a^3 \ln 2}
  \operatorname{tr}\big[ (R + \delta I_D)^{-1} - (C_Z + \delta I_D)^{-1} \big]
  \ge 0.
  \label{eq:analysis-scale-derivative}
\end{equation}
The positive covariance floor therefore introduces a scale preference. The
scaling policy and $\epsilon$ are part of the objective specification, rather
than interchangeable numerical details. These results assume fixed observer
features and an admissible target-rescaling direction; output normalization
can remove that direction from the model's parameterization.

\section{Experimental Details}
\label{app:experiment-details}

\paragraph{Encoders.} Every method trains the same ViT encoder of width $192$,
twelve blocks, three heads, and patch size $16$ on $224 \times 224$ images. The
representation is the $\ell_2$-normalized mean of the final-LayerNorm patch
tokens, excluding the class token, giving a $192$-dimensional code.

\paragraph{Observers.} All observers are frozen, randomly initialized
transformers of width $192$, twelve blocks, and three heads, whose mean-pooled
output gives the observer features. For each example, the masks
$\mathcal{M}_i$ define eight independent random views, each leaving $49$ of
the $196$ patch positions visible; the views may overlap and are drawn from a
shared deterministic mask stream that is identical across methods. The
$\Scomp$ observer receives $8 \times 49 + 28 + 28 = 448$ tokens: view tokens,
relation tokens built from the concatenated binary visibility maps of each of
the $28$ view pairs, and interaction tokens built from the zero-masked raw RGB
views $u_i, u_j$ (Section~\ref*{sec:comp-observer}). Each token carries a fixed
type embedding, and pair tokens carry the sum of their two view positions. The
$\Sphicomp$ observer receives the $392$ view tokens only. The $S^\phi$
observer receives the $196$ whole-image patch tokens and
uses the reference conventions of \citet{zhang2026learnable}: standardized
features (\appref{app:preprocessing}), a sum-error ridge with $\lambda = 3$,
and $\eta = 30$. Observer weights are never updated, and encoder outputs never
enter the observer's feature construction.

\paragraph{Training.} The three epiplexity objectives use AdamW~\citep{loshchilov2019decoupled} with
$\beta = (0.9, 0.999)$, weight decay $0.01$, gradient clipping at norm $1$, and
a $334$-update linear warmup, for ten epochs of $3{,}336$ updates at batch size
$384$. The learning rate is $3 \times 10^{-4}$ for $S^\phi$ and $\Sphicomp$ and
$10^{-3}$ for $\Scomp$. The constants $\gamma$, $\eta$, and $\epsilon$ of
$\Sphicomp$ and $\Scomp$ are calibrated separately for each seed on
unlabeled training batches and frozen before training. $S^\phi$ uses its
reference image pipeline: bilinear resizing to $224 \times 224$,
without augmentation or ImageNet normalization. Encoder and observer
computation uses bfloat16 autocast, and ridge solves and scores are computed in
double precision.

\paragraph{Hardware.} Each training run executed on a single NVIDIA GPU, using
H100, H200, B200, and B300 GPUs across RunPod and Lambda instances. The compute
profiles below were measured on a B300.

\paragraph{Evaluation protocols.} The ImageNet linear probe is trained with
LARS for $90$ epochs at batch size $16{,}384$ (base learning rate $6.4$, ten
warmup epochs, cosine schedule) on frozen features of the full training set and
evaluated on the $50{,}000$ validation images; $5$-nearest-neighbor accuracy
uses the same features. Low-label ImageNet fits a double-precision L-BFGS
multinomial logistic regression on $20{,}000$ labeled training images ($20$ per
class), selects its $\ell_2$ penalty on $5{,}000$ development images, and tests
on $10{,}000$ validation images. Transfer uses the same protocol with $800$
fitting and $200$ development images, and $10{,}000$ (CIFAR-100), $1{,}880$
(DTD), and $2{,}700$ (EuroSAT) test images. Relational readouts are multilayer
perceptrons with hidden width $256$, trained for up to $30$ epochs with the
epoch and weight decay selected on development data. SpatialSense predicts a
binary relation label from encoder features, object names, and normalized
object locations ($11{,}238$ fitting, $2{,}638$ development, and $3{,}622$ test
examples); the CLEVR subset contains questions whose program includes a
\emph{relate} operation, answered over $27$ classes from encoder features and
TF-IDF question features ($23{,}679$, $5{,}924$, and $5{,}912$ examples), with
image-disjoint splits. Spread is computed from the shares $p_j$ of variance in
each principal direction of the low-label test features: the effective rank of
the representation covariance~\citep{roy2007effective} is $\exp(-\sum_j p_j \ln p_j)$ and the isotropy is $1 / (D \sum_j p_j^2)$.
Held-out predictability fits a ridge readout from a common frozen compositional
observer on $384$ support images and predicts the codes of $384$ disjoint query
images, relative to predicting the support mean.
Figure~\ref{fig:stage2-probe-history} shows ImageNet linear-probe accuracy over
the $90$ epochs of probe training.

\begin{figure}[H]
\centering
\includegraphics[width=0.6\linewidth]{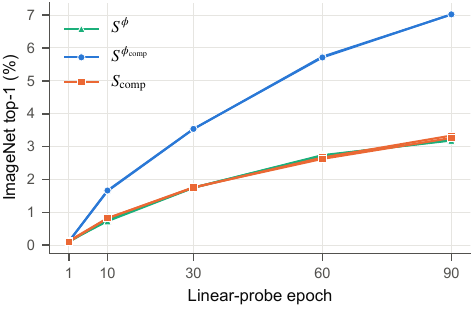}
\caption{ImageNet linear-probe top-1 accuracy during probe training, one line per
seed.}
\label{fig:stage2-probe-history}
\end{figure}

\paragraph{Training compute and comparison with MAE.} We profile one complete
update of each method on a batch of $384$ images and extrapolate to the
$33{,}360$ updates of a run, counting two FLOPs per multiply-add.
Table~\ref{tab:stage2-compute} compares the three epiplexity objectives with a
masked autoencoder (MAE) trained on the same data and schedule with the same
encoder and an eight-block, $512$-wide decoder. All three objectives are far
cheaper to train: MAE requires $434.78$ PFLOPs per run, $3.41\times$ the
counted compute of $S^\phi$, $2.58\times$ that of $\Sphicomp$, and $2.24\times$ that of $\Scomp$ (Figure~\ref{fig:stage2-compute}). Over two seeds, MAE uses
$869.56$ PFLOPs, compared with $255.24$, $337.28$, and $387.63$ PFLOPs for the
three epiplexity objectives. The profiler omits operations it does not support,
including softmax, normalization, Cholesky and triangular solves,
convolutions, and optimizer kernels, so the counts are lower bounds.

\begin{table}[H]
\caption{Counted training compute, extrapolated from one profiled update. The last
column gives MAE's compute as a multiple of each method's.}
\label{tab:stage2-compute}
\centering
\small
\begin{tabular}{lcccc}
\toprule
Method & TFLOPs/update & PFLOPs/run & PFLOPs (two seeds) & MAE $\times$ \\
\midrule
MAE & 13.033 & 434.78 & 869.56 & 1.00 \\
$S^\phi$ & 3.826 & 127.62 & 255.24 & 3.41 \\
$\Sphicomp$ & 5.055 & 168.64 & 337.28 & 2.58 \\
$\Scomp$ & 5.810 & 193.82 & 387.63 & 2.24 \\
\bottomrule
\end{tabular}
\end{table}

\begin{figure}[H]
\centering
\includegraphics[width=0.78\linewidth]{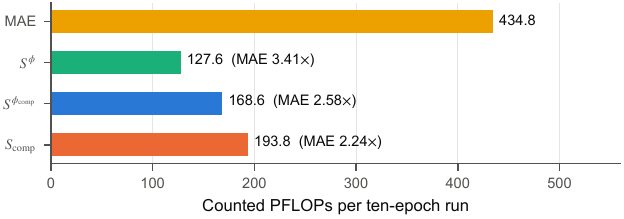}
\caption{Counted PFLOPs per ten-epoch run, with MAE's multiple of each method.
Counts are lower bounds.}
\label{fig:stage2-compute}
\end{figure}

\paragraph{Per-seed results.} Table~\ref{tab:stage2-per-seed} reports every
accuracy in Table~\ref*{tab:stage2-accuracy} for each pretraining seed, and
Figure~\ref{fig:stage2-per-seed} plots them.

\begin{table}[H]
\caption{Frozen-encoder accuracy (\%) for each seed.}
\label{tab:stage2-per-seed}
\centering
\small
\setlength{\tabcolsep}{5pt}
\begin{tabular}{lcccccc}
\toprule
& \multicolumn{2}{c}{$S^\phi$} & \multicolumn{2}{c}{$\Sphicomp$} & \multicolumn{2}{c}{$\Scomp$} \\
\cmidrule(lr){2-3} \cmidrule(lr){4-5} \cmidrule(lr){6-7}
Evaluation & Seed 0 & Seed 1 & Seed 0 & Seed 1 & Seed 0 & Seed 1 \\
\midrule
ImageNet top-1 & 3.20 & 3.18 & 7.02 & 7.02 & 3.33 & 3.26 \\
ImageNet top-5 & 9.57 & 9.56 & 17.35 & 17.07 & 9.66 & 9.62 \\
ImageNet 5-NN & 1.64 & 1.59 & 2.67 & 2.68 & 1.49 & 1.46 \\
Low-label ImageNet & 1.91 & 1.96 & 2.23 & 2.23 & 2.08 & 2.05 \\
CIFAR-100 & 8.97 & 7.67 & 9.57 & 9.79 & 6.55 & 7.32 \\
DTD & 11.54 & 10.90 & 11.76 & 10.64 & 11.28 & 11.38 \\
EuroSAT & 63.33 & 63.67 & 67.37 & 68.67 & 65.78 & 65.59 \\
SpatialSense & 52.37 & 53.31 & 52.87 & 52.62 & 52.93 & 51.71 \\
CLEVR relational subset & 44.25 & 44.74 & 44.30 & 44.59 & 44.79 & 43.74 \\
\bottomrule
\end{tabular}
\end{table}

\begin{figure}[H]
\centering
\includegraphics[width=\linewidth]{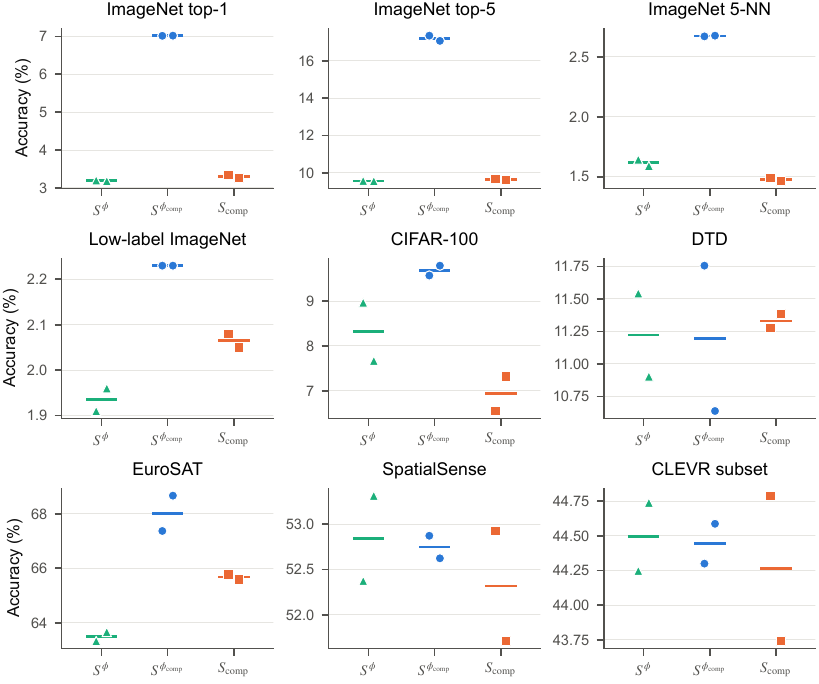}
\caption{Accuracy of each seed on every evaluation; lines mark the two-seed means.}
\label{fig:stage2-per-seed}
\end{figure}

\paragraph{Transfer geometry.} Table~\ref{tab:transfer-geometry} reports the
isotropy of each frozen encoder on the test images of each dataset, and
Figure~\ref{fig:stage2-spectra-datasets} shows the corresponding spectra.

\begin{table}[H]
\caption{Isotropy of frozen codes on each dataset, averaged over two seeds.}
\label{tab:transfer-geometry}
\centering
\small
\begin{tabular}{lcccc}
\toprule
Encoder & ImageNet (low-label) & CIFAR-100 & DTD & EuroSAT \\
\midrule
$S^\phi$ & 0.194 & 0.143 & 0.134 & 0.027 \\
$\Sphicomp$ & 0.869 & 0.563 & 0.456 & 0.039 \\
$\Scomp$ & 0.112 & 0.091 & 0.080 & 0.020 \\
\bottomrule
\end{tabular}
\end{table}

\begin{figure}[H]
\centering
\includegraphics[width=\linewidth]{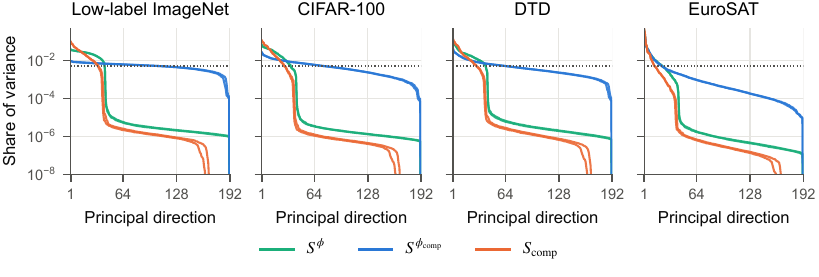}
\caption{Variance share per principal direction on each evaluation dataset, one
line per seed. Dotted: isotropic share $1/192$.}
\label{fig:stage2-spectra-datasets}
\end{figure}

\paragraph{Relational readout controls.} Table~\ref{tab:relational-controls}
reports the readouts trained without encoder features or without metadata,
alongside the encoder-plus-metadata readouts of
Table~\ref*{tab:stage2-accuracy}.

\begin{table}[H]
\caption{Relational readout accuracy (\%) by input, averaged over two seeds.}
\label{tab:relational-controls}
\centering
\small
\begin{tabular}{llccc}
\toprule
Dataset & Readout input & $S^\phi$ & $\Sphicomp$ & $\Scomp$ \\
\midrule
SpatialSense & Encoder and metadata & 52.84 & 52.75 & 52.32 \\
& Encoder only & 51.79 & 50.37 & 51.01 \\
& Object names only & 54.90 & 54.67 & 54.90 \\
& Locations only & 66.97 & 67.06 & 66.97 \\
& Names and locations & 57.28 & 57.01 & 57.28 \\
\midrule
CLEVR subset & Encoder and question & 44.49 & 44.44 & 44.27 \\
& Question only & 43.54 & 43.54 & 43.54 \\
\bottomrule
\end{tabular}
\end{table}

\end{document}